\newtheorem{definition}{Definition}[section]
\newtheorem{remark}{Remark}[section]
\newtheorem{theorem}{Theorem}[section]
\newtheorem{lemma}{Lemma}[section]
\newtheorem{proposition}{Proposition}[section]
\newtheorem{corollary}{Corollary}[section]
\newtheorem{example}{Example}[section]
\theoremstyle{remark}
\date{}
\newcommand{\bt}{\begin{theorem}}
\newcommand{\et}{\end{theorem}}
\newcommand{\bl}{\begin{lemma}}
\newcommand{\el}{\end{lemma}}
\newcommand{\bexc}{\begin{exercise}}
\newcommand{\eexc}{\end{exercise}}
\newcommand{\bpr}{\begin{proposition}}
\newcommand{\epr}{\end{proposition}}
\newcommand{\bex}{\begin{example}}
\newcommand{\eex}{\end{example}}
\newcommand{\bc}{\begin{corollary}}
\newcommand{\ec}{\end{corollary}}
\newcommand{\bo}{\begin{proof}}
\newcommand{\eo}{\end{proof}}
\newcommand{\bd}{\begin{definition}}
\newcommand{\ed}{\end{definition}}
\newcommand{\br}{\begin{remark}}
\newcommand{\er}{\end{remark}}
\newcommand{\be}{\begin{enumerate}}
\newcommand{\ee}{\end{enumerate}}
\title{
    {\textbf{See Through the Fog: Curriculum Learning with Progressive Occlusion in Medical Imaging}}\\ \vspace{0.1cm}
    \author{\large Pradeep Singh, Kishore Babu Nampalle, Uppala Vivek Narayan, Balasubramanian Raman \vspace{0.1cm}\\
    \normalsize Department of Computer Science and Engineering \vspace{0.1cm}\\ \normalsize Indian Institute of Technology Roorkee}
}
\date{\today}
\begin{document}

\maketitle

\begin{abstract}
In recent years, deep learning models have revolutionized medical image interpretation, offering substantial improvements in diagnostic accuracy. However, these models often struggle with challenging images where critical features are partially or fully occluded, which is a common scenario in clinical practice.  In this paper, we propose a novel curriculum learning-based approach to train deep learning models to handle occluded medical images effectively. Our method progressively introduces occlusion, starting from clear, unobstructed images and gradually moving to images with increasing occlusion levels. This ordered learning process, akin to human learning, allows the model to first grasp simple, discernable patterns and subsequently build upon this knowledge to understand more complicated, occluded scenarios.  Furthermore, we present three novel occlusion synthesis methods, namely \textbf{Wasserstein Curriculum Learning} (WCL), \textbf{Information Adaptive Learning} (IAL), and \textbf{Geodesic Curriculum Learning} (GCL). Our extensive experiments on diverse medical image datasets demonstrate substantial improvements in model robustness and diagnostic accuracy over conventional training methodologies. 
	
\end{abstract}

\section{Introduction}
Medical imaging plays a pivotal role in modern healthcare, providing critical information for diagnosis, treatment planning, and disease monitoring. However, accurate interpretation of these images remains a challenging task, largely due to their complex nature and the extensive variation observed among patients. In recent years, deep learning has emerged as a promising tool to augment the capabilities of medical practitioners, facilitating better and faster interpretation of medical images.\\

Deep learning models, particularly convolutional neural networks (CNNs), have shown impressive performance in tasks such as image classification, object detection, and semantic segmentation \cite{lecun2015deep}. Their ability to automatically learn hierarchical representations from raw data makes them ideally suited for medical image analysis \cite{litjens2017survey}. Despite their potential, however, the performance of these models can be significantly affected by the presence of occlusion in images - where important features or objects are partially or fully obscured \cite{girshick2014rich}. This problem is of particular concern in the medical domain, where images often contain overlapping structures or may be occluded by medical instruments, implants, or artifacts.\\

Curriculum learning is an approach inspired by the way humans  and animals learn, where the learning process starts from easy examples and gradually moves to more complex ones.  This is seen in our educational curriculum where we learn numbers before algebra, sentences before essays. This idea can be applied to machine learning, to train a model on simpler tasks or examples before more complex ones can make the learning process more effective. This notion was introduced by Bengio et al. in 2009 \cite{bengio2009curriculum}, postulating that a model could learn more effectively and efficiently if it first learns to recognize easily distinguishable patterns and progressively handles more difficult concepts. The authors investigate the value of a structured, incremental approach to training machine learning models, particularly neural networks. This approach can help the model to find better or more appropriate local minima in the error surface. The training data is sorted in a meaningful order that presents simpler concepts before more complex ones. This is opposed to the traditional method of presenting training examples randomly. This structured presentation of data could lead to a sort of \say{scaffolding} where knowledge is built incrementally and complexities are added progressively.\\

In the experiments presented in the paper \cite{bengio2009curriculum}, Bengio and his team demonstrate the potential benefits of curriculum learning in several contexts, including learning to recognize shapes, language modeling, and other tasks. They show that a curriculum can help improve generalization and speed up training, suggesting that this kind of structured learning can be a valuable tool in training deep learning models. However, \emph{one of the main challenges that the paper highlights is how to define and design a \say{curriculum} for a given problem}. It remains an open problem and a potential area of research.\\

In this paper, we propose a novel application of curriculum learning to tackle the challenge of occlusion in medical images. We argue that, by progressively increasing the complexity of training examples in terms of occlusion, deep learning models can learn more robust and accurate representations. We start by training the model with clear, unobstructed images, and then gradually introduce images with varying levels of occlusion. This staged learning process enables the model to initially grasp the simple, discernable patterns in the data and subsequently apply this foundational knowledge to understand more complicated, occluded scenarios. Our approach aims to improve the robustness and diagnostic accuracy of deep learning models in real-world medical applications, where images are often not perfect and occlusion is frequently encountered. Through a series of experiments on various medical image datasets, we demonstrate the competency of our proposed approach over traditional training methods. Our approach to progressively introducing occlusion challenges draws inspiration from the methodologies of problem-solving in physics, which often involve starting with simpler cases before moving on to more complex scenarios \cite{irodov1981problems, krotov1990aptitude}. We believe this work paves the way for a new line of research in making artificial intelligence more reliable and effective in the realm of healthcare. Our primary contributions are:
\begin{itemize}
\item We develop a novel curriculum learning strategy for deep learning models that adaptively incorporates increasing levels of occlusion, providing a robust solution for handling occluded medical images in classification tasks.

\item We introduce three novel occlusion synthesis methods  based on optimal transport principles, information theory and exploring the high-dimensional space of occluded images from a  geometric perspective to optimize the model training process. We name them \texttt{Wasserstein  Curriculum Learning} (WCL), \texttt{Information Adaptive Learning} (IAL) and \texttt{Geodesic Curriculum Learning} (GCL).

\item We demonstrate through extensive experiments on real-world medical image datasets, the effectiveness of our proposed methodology in significantly improving the classification performance over baseline models.
\end{itemize}

The rest of the paper is organized as follows: Section \ref{sec:ba} provides a comprehensive review of related works in the fields of deep learning for medical imaging and curriculum learning. Section \ref{sec:me} details the methodology of our curriculum learning approach with progressive occlusion. In Section \ref{sec:exp}, we present our experimental setup, including the datasets used, the evaluation metrics, and the baseline models for comparison.  Finally, we conclude the paper and discuss future research directions in Section \ref{sec:di}.

\section{Background}
\label{sec:ba}

Medical image analysis has been a central focus of artificial intelligence research over the past decades. Specifically, deep learning techniques have shown promising results in a wide array of applications, ranging from disease diagnosis to anatomical structure segmentation. However, occlusions present in medical images introduce added complexity and ambiguity, challenging these techniques significantly.\\

Traditional approaches for medical image analysis primarily relied on handcrafted features, including textural, morphological, and statistical properties of the images \cite{madabhushi2016image}. However, such methods often grapple with variability across different patients, modalities, and institutions. Convolutional neural networks (CNNs), in particular, have revolutionised this field by allowing the automatic extraction of discriminative features straight from raw data \cite{bengio2007scaling}. These models have achieved state-of-the-art performance in many medical imaging tasks, such as diagnosing diabetic retinopathy from retinal images \cite{gulshan2016development}, detecting lung nodules from CT scans \cite{ciompi2015automatic}, and classifying skin lesions from dermoscopic images \cite{esteva2017dermatologist}.\\

The concept of curriculum learning, introduced by Bengio et al., draws inspiration from the learning progression in humans and animals. The fundamental idea is to initiate the training process with simpler examples, gradually escalating the complexity. This strategy has proven its efficacy in various domains, ranging from object recognition to natural language processing \cite{liu2020deep}. However, its application to medical image analysis remains an under explored area.\\

Handling occlusions in images has been a long-standing challenge in computer vision. Occlusions, caused by various factors like overlapping structures, foreign objects, or missing data, lead to incomplete or ambiguous visual information \cite{halford2014children}. Several methods have been proposed to counter occlusions, from occlusion-aware models, such as part-based models and deformable models \cite{felzenszwalb2009object}, to occlusion synthesis techniques for data augmentation \cite{kar20223d}. Despite these efforts, occlusion remains a significant hurdle, especially in medical images, where visual information is often intricate, and the repercussions of misinterpretation are severe.\\

Recent advances in machine learning have begun to exploit more advanced mathematical concepts, such as optimal transport and differential geometry. Optimal transport offers a potent tool for comparing and transforming probability distributions, finding a multitude of applications in machine learning, from domain adaptation to generative models \cite{cuturi2013sinkhorn}. On the other hand, differential geometry provides a framework for understanding high-dimensional spaces and has been employed to investigate the properties of neural networks and the dynamics of their training process \cite{bronstein2017geometric}. In this work, we propose a novel fusion of these diverse research areas to tackle the challenge of occlusions in medical images. By integrating the principles of curriculum learning with occlusion synthesis techniques, and employing the mathematical tools of optimal transport and differential geometry, we aim to develop a robust and efficient training strategy for deep learning models applied to occluded medical images. In the subsequent sections, we will detail our methodology and present our experimental results.

\section{Methodology}
\label{sec:me}
In this section, we detail the methodology of our curriculum learning approach with progressive occlusion. This approach encompasses two main steps: occlusion generation (refer figure \ref{fig:fig2}) and training schedule design (refer figure \ref{fig:fig3}).

\begin{figure}[ht]
    \centering
    \includegraphics[width=12cm, height=6cm]{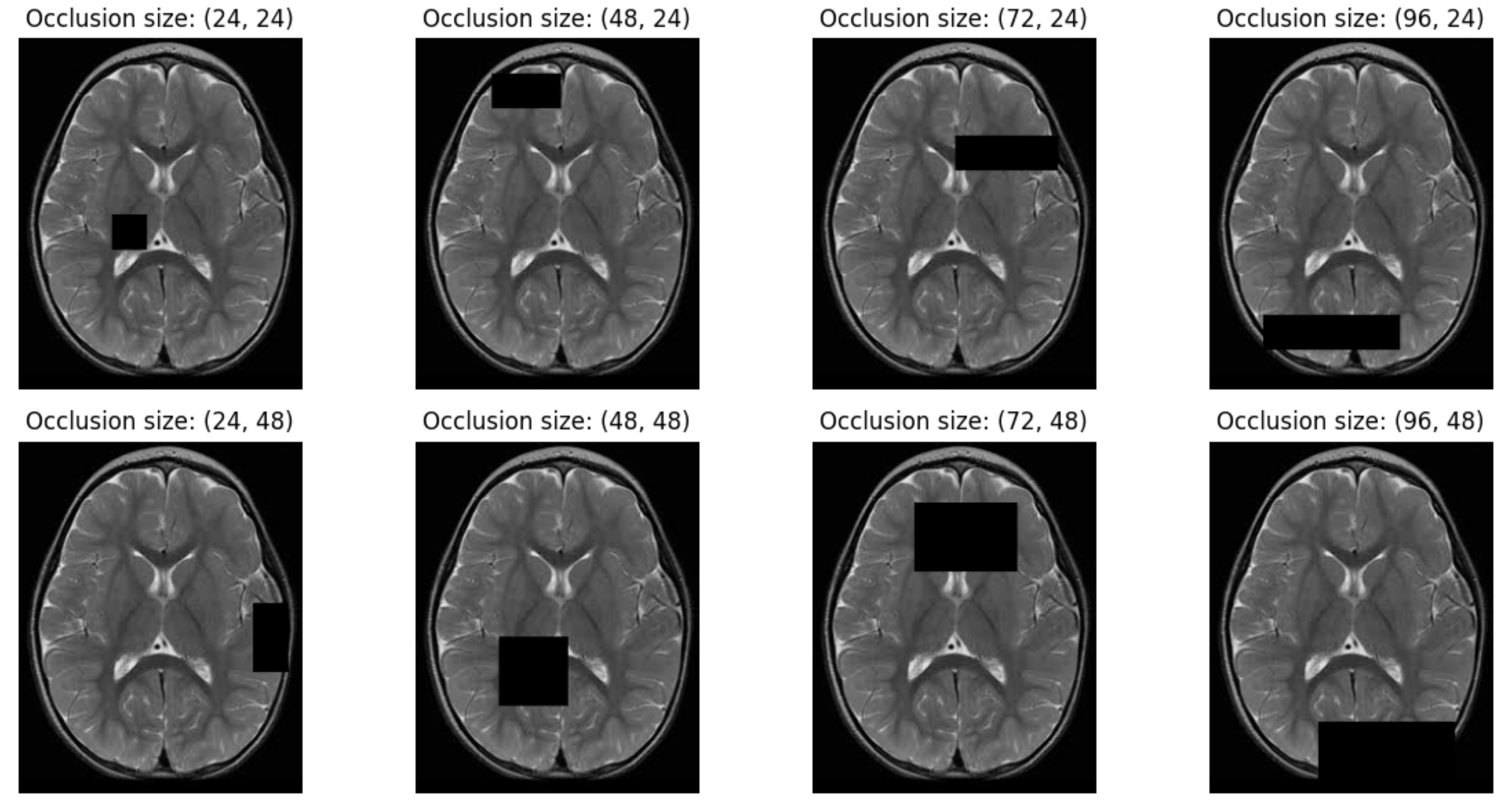}
    \caption{Progressive occlusion strategy showing areal occlusions}\vspace{.5cm}
    \label{fig:fig2}
\end{figure}

\begin{figure}[ht]
    \centering
     \includegraphics[width=14cm, height=9cm]{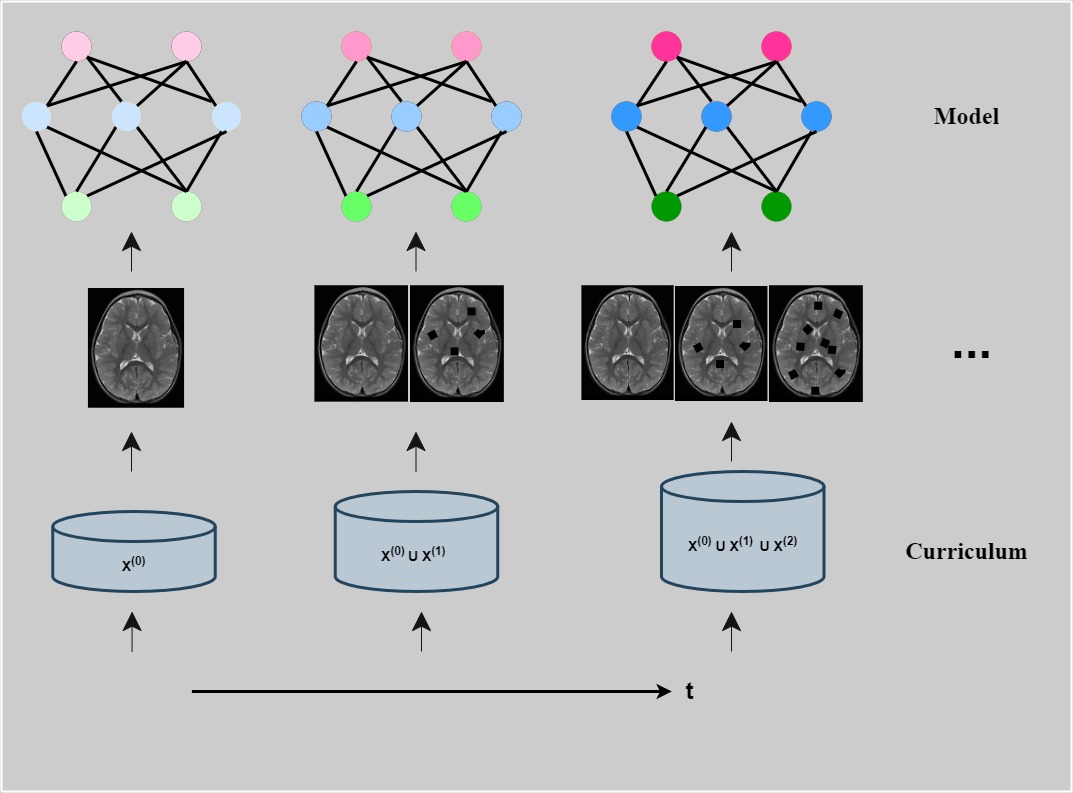}
    \caption{ Schematic representation of the operation of curriculum learning
 technique}\vspace{.5cm}
    \label{fig:fig3}
\end{figure}

\subsection{Occlusion Generation}

Let $X = \{x_1, x_2, \ldots, x_n\}$ represent our dataset of $n$ images, and $x_i \in \mathbb{R}^{h \times w \times c}$ denote an individual image of height $h$, width $w$, and $c$ color channels. To generate occlusion, we introduce a binary mask $M = \{m_1, m_2, \ldots, m_n\}$, where each $m_i \in \{0, 1\}^{h \times w}$. We define the operation $\odot$ to denote element-wise multiplication. The occluded image $x_i'$ is then given by $x_i' = x_i \odot m_i$. The mask $m_i$ is generated by randomly selecting a region within the image and setting the corresponding pixels to 0, effectively occluding that region.

\subsection{Training Schedule Design}

With the occluded images, we now aim to design a learning schedule following the principles of curriculum learning. In our approach, we define a function $f: X \rightarrow \mathbb{R}$ that assigns a difficulty score to each image. The difficulty of an image $x_i$ is proportional to the size of the occlusion, denoted as $|m_i|$. Therefore, we can express this as $f(x_i) = |m_i|$. We sort the dataset $X$ based on the difficulty scores to obtain a new ordered dataset $X' = \{x'_1, x'_2, \ldots, x'_n\}$, such that $f(x'_i) \leq f(x'_j)$ for all $1 \leq i < j \leq n$. Next, we divide the learning process into $T$ stages. At each stage $t$, we train the model using a subset of the ordered dataset $S_t = \{x'_i |\ i \leq n_t\}$, where $n_t = \lceil t \cdot n / T \rceil$. This implies that the model is initially trained with the least occluded images and progressively exposed to more occluded examples as the stages advance. For the smaller datasets we can induce different levels of occlusion per sample. Let $\delta$ represent the number of times each sample is used for generating its occluded representations. Then the new ordered dataset is given by $ X^* = \coprod\limits_{j = 0}^{\delta} X^{(j)}$, where $X^{(j)}$ represents the $j$th level occluded dataset ($X^{(0)}$ represents the original dataset). The definition of order on $X'$ is carried forward onto $X^*$.\\

Let $y = \{y_1, y_2, \ldots, y_n\}$ represent the ground truth labels corresponding to the images in $X$. Our model $M: \mathbb{R}^{h \times w \times c} \rightarrow \mathbb{R}^k$ outputs a $k$-dimensional vector for each image, representing the predicted probabilities for $k$ classes. We employ the standard cross-entropy loss function $L: \mathbb{R}^k \times \mathbb{R}^k \rightarrow \mathbb{R}$, defined as $L(y, \hat{y}) = -\sum_{i=1}^{k} y_i \log(\hat{y}_i)$, where $y$ and $\hat{y}$ represent the ground truth and predicted probability vectors, respectively. The total loss for the dataset $X$ is then $\mathcal{L}(X, y) = \frac{1}{n}\sum_{i=1}^{n} L(y_i, M(x_i'))$, where $n$ is the number of images in $X$. During training, we aim to minimize the loss function using stochastic gradient descent, adjusting the model parameters to better fit the training data.

\subsection{Wasserstein Curriculum Learning (WCL)}\label{method}
Historically, the concept of Wasserstein distance has been a fundamental element in the field of generative models, particularly in the training of Generative Adversarial Networks (GANs). The introduction of the Wasserstein GAN (WGAN) \cite {wang2019evolutionary} aimed to tackle the common issues associated with traditional GANs, such as unstable training, mode collapse, and vanishing gradients. The application of the Wasserstein distance in this context allowed for a more meaningful and smoother gradient during training, resulting in improved model stability and performance. Furthermore, the Wasserstein distance has been applied in the domain of adaptation and optimal transport \cite{zhang2019optimal}. It provides a measure to compute the discrepancy between source and target domain distributions, enabling a geometrically meaningful way of transporting samples from one distribution to the other. Drawing upon this historical usage of the Wasserstein distance in deep learning, we leverage its power to compare probability distributions in our WCL framework. This distance provides an effective means to ensure a smooth transition of occlusion distributions from one level of complexity to the next, facilitating more efficient and effective learning. This will enable a more efficient and effective way of increasing the complexity of our training examples in a continuous, rather than discrete, manner. The central idea here is to devise a mechanism by which the transition from less occluded to more occluded images becomes more fluid, in turn, offering the model a smoother learning trajectory. Wasserstein distances, a fundamental notion in optimal transport, to measure the \say{distance} or discrepancy between different distributions. For a smoother curriculum learning experience, we devise the notion of \texttt{Wasserstein Curriculum Learning}. The goal is to match the occlusion distributions of the training data from one complexity level to the next. This approach leverages the first Wasserstein distance, also known as the Earth Mover's distance, to guide the gradual transition from less occluded to more occluded images.\\

The Wasserstein distance is a measure of the difference between two probability distributions that considers the underlying geometry of the data space. It is defined in the context of optimal transport theory, a branch of mathematics that deals with transporting mass in an optimal way. The $p$-th Wasserstein distance between two probability measures $\mu$ and $\nu$ on a metric space $(X, d)$ is defined as follows: Given two probability measures $\mu$ and $\nu$ on a metric space $(X, d)$, the $p$-th Wasserstein distance $W_p(\mu, \nu)$ is defined as:
$$W_p(\mu, \nu) = \left(\inf_{\gamma \in \Gamma(\mu, \nu)} \int_{X \times X} (d(x,y))^p d(\gamma(x, y)) \right)^{1/p},$$
where $p \geq 1$ and $\Gamma(\mu, \nu)$ is the set of all joint distributions $\gamma$ on $X \times X$ with marginals $\mu$ and $\nu$ on the first and second factors respectively. For the commonly used first Wasserstein distance ($p=1$), it is often referred to as the earth mover's distance, as It might be compared to the least expensive way to move and change a mound of dirt that is in the form of one probability distribution into that of another. This definition includes a minimization over all possible joint distributions between $\mu$ and $\nu$. Solving this problem is computationally expensive and is typically approximated in practice.\\

Let us represent the occlusion of each image $x_i$ as a normalized histogram $h_i \in \mathbb{R}^b$, where $b$ is the number of bins. The occlusion histogram $h_i$ can be thought of as a discrete probability distribution over the occlusion levels. Given two successive stages $t$ and $t+1$, we aim to minimize the Wasserstein distance between the occlusion distributions of the corresponding data subsets, $S_t$ and $S_{t+1}$. The first Wasserstein distance $W_1$ between two probability distributions $P$ and $Q$ is defined as $W_1(P, Q) = \min_{\gamma \in \Gamma(P, Q)} \sum_{i,j} |i - j| \cdot \gamma_{i,j}$, where $\Gamma(P, Q)$ is the set of all joint distributions $\gamma$ whose marginals are $P$ and $Q$. In our case, $P$ and $Q$ correspond to the occlusion histograms of $S_t$ and $S_{t+1}$. The Wasserstein distance offers an effective way of comparing probability distributions, taking into account not only the discrepancies in the distribution values but also their locations.\\

By using this method, the model experiences a smooth increase in complexity, since the occlusion levels between two successive stages have minimal distance, and no sudden jumps are experienced. This, in turn, could facilitate a more effective learning process, allowing the model to adjust more easily to the new complexity level. Here too our training objective remains to minimize the cross-entropy loss. However, we introduce a regularization term to encourage a smooth transition between successive stages. Thus, our new loss function becomes 
\begin{align*}
{\mathcal{L}}^{(W)}(X, y) = \frac{1}{n}\sum_{i=1}^{n} L(y_i, M(x_i')) + \lambda W_1(S_t, S_{t+1}),
\end{align*}
where $\lambda > 0$ is a hyperparameter that controls the importance of the Wasserstein distance in the loss function.\\

At each stage $t$, we have a set of training data, and the occlusion histogram of this data forms a discrete probability distribution $S_t$. As the stages advance (i.e., as $t$ increases), the complexity of the tasks also increases, here characterized by the level of occlusion in the images. In essence, $S_t$ and $S_{t+1}$ represent the discrete probability distributions of occlusion levels for two consecutive stages in the learning process. The goal of WCL, as defined in the loss function, is to minimize the cross-entropy loss while encouraging a smooth transition between these successive stages, as measured by the Wasserstein distance between $S_t$ and $S_{t+1}$.

\subsection{Information Adaptive Learning (IAL)}

We devise a strategy called \texttt{Information Adaptive Learning} for adaptively determining the optimal level of occlusion to be introduced at each stage of training. This strategy involves formulating an auxiliary optimization problem, which aims to maximize the mutual information between the model's outputs and the true labels, subject to the constraint of a maximum allowed occlusion. A notion from information theory called \say{mutual information} quantifies the amount of knowledge one random variable can learn from observing another random variable \cite{cover2006elements}. In our case, we are interested in the mutual information between the true labels $Y = {y_1, y_2, \ldots, y_n}$ and the model's outputs $\hat{Y} = {\hat{y}_1, \hat{y}_2, \ldots, \hat{y}_n}$, which we denote as $I(Y; \hat{Y})$.\\

Given a maximum allowed occlusion $\alpha$, we aim to find the occlusion level that maximizes the mutual information. This can be expressed as the following optimization problem:
\begin{align*}
\max_{|m_i| \leq \alpha} &\quad I(Y; \hat{Y}) \\
\text{subject to} &\quad |m_i| \leq \alpha, \quad \forall i \in {1, \ldots, n}
\end{align*}

In this optimization problem, $|m_i|$ represents the level of occlusion introduced to the i-th sample \cite{gal2016dropout}. The constraint $|m_i| \leq \alpha$ ensures that the occlusion level does not exceed the maximum limit specified by $\alpha$. The mutual information $I(Y; \hat{Y})$ can be estimated using various techniques \cite{belghazi2018mutual}, such as non-parametric methods based on k-nearest neighbors, or parametric methods assuming specific distributions of $Y$ and $\hat{Y}$. This optimization problem can be solved using gradient-based methods, where the gradient of $I(Y; \hat{Y})$ with respect to $m_i$ can be approximated using backpropagation.\\

We modify our training objective again, where our loss function becomes a weighted combination of the cross-entropy loss, the Wasserstein distance \cite{arjovsky2017wasserstein}, and the negative mutual information (since we aim to maximize the mutual information):
\begin{align*}
{\mathcal{L}}^{(I)}(X, y) = \frac{1}{n}\sum_{i=1}^{n} L(y_i, M(x_i')) + \lambda_1 W_1(S_t, S_{t+1}) - \lambda_2 I(Y; \hat{Y}),
\end{align*}
where $\lambda_1, \lambda_2 > 0$ are hyperparameters controlling the importance of each term.\\

This adaptive occlusion optimization strategy adds another layer of sophistication to our curriculum learning approach. By dynamically adjusting the level of occlusion based on the model's current performance, we can ensure that the model is always presented with the right amount of challenge, thereby fostering more effective learning. This approach, combined with the Wasserstein curriculum learning strategy, provides a comprehensive framework for robust training of deep learning models on occluded medical images.

\subsection{Geodesic Curriculum Learning (GCL)}

The concept of viewing the model's state during training as a point in a high-dimensional vector space, or more formally, a Riemannian manifold, has been explored in several areas of machine learning. This approach leverages the power of differential geometry to handle complex learning trajectories, adapting and evolving model parameters based on the underlying geometric structure of the data. In \cite{sra2015conic},  the authors demonstrated the application of geometric   optimization on the manifold of positive definite matrices, introducing a way to handle constraints and structure in the optimization process.\\

The state of our model during training can be characterized by the weights of its layers, which form a high-dimensional vector space. We can view this vector space as a Riemannian manifold \cite{tu2008introduction, tu2017differential}, a mathematical structure that generalizes the notion of curved surfaces to high dimensions. In a Riemannian manifold, the distance between two points (or states of our model) is determined by a metric tensor. In our case, we define the metric tensor based on the cross-entropy loss function and the Wasserstein distance between successive stages. This leads to an adaptive representation of our model's learning trajectory, where the \say{curvature} of the learning path is determined by the complexity of the training data at each stage.\\

In this framework, the optimal learning trajectory becomes the geodesic path on this manifold. We name it \texttt{Geodesic Curriculum Learning}. A geodesic is the shortest path between two points on a curved surface, or more generally, a Riemannian manifold. By following this path, our model can adapt more efficiently to the increasing complexity of the training data, leading to faster convergence and improved performance.
 Given two successive stages $t$ and $t+1$, the geodesic path connecting the corresponding model states is the solution to the geodesic equation, which in general can be written as:
\begin{align*}
\frac{d^2 x^\lambda}{d t^2} + \Gamma^{\lambda}_{\mu\nu} \frac{dx^\mu}{dt} \frac{dx^\nu}{dt} = 0,
\end{align*}
where $\Gamma^{\lambda}_{\mu\nu}$ are the Christoffel symbols, which depend on the metric of the space and hence on the loss function and Wasserstein distance, and $x^\lambda$ are the coordinates on the manifold, representing the model parameters at a particular stage. The indices $\lambda$, $\mu$, and $\nu$ run over all dimensions of the model parameter space. This is a system of second-order differential equations that describe the evolution of the model's weights along the geodesic path. In practice, we can approximate the solution to these equations using numerical integration methods, such as the Euler method or the Runge-Kutta method.\\

We further refine our training objective. In addition to the cross-entropy loss, the Wasserstein distance, and the mutual information, we introduce a regularization term based on the length of the geodesic path. The new loss function becomes:
\begin{align*}
{\mathcal{L}}^{(G)}(X, y) = \frac{1}{n}\sum_{i=1}^{n} L(y_i, M(x_i')) + \lambda_1 W_1(S_t, S_{t+1})\ - \lambda_2 I(Y; \hat{Y}) + \lambda_3 L_{geo}(M_t, M_{t+1}),
\end{align*}
where $L_{geo}(M_t, M_{t+1})$ represents the length of the geodesic path between the model states at stages $t$ and $t+1$, and $\lambda_3 > 0$ is a hyperparameter. It can be written as:
\begin{align*}
L_{geo}(M_t, M_{t+1}) = \int_{t}^{t+1} \sqrt{g_{ij}\frac{dM^i}{dt}\frac{dM^j}{dt}} dt,
\end{align*}
where $g_{ij}$ is the metric tensor which encodes the \say{distance} between two infinitesimally close points in the parameter space, and $M^i$ represents the coordinates in the parameter space (i.e., the weights of the model). This equation essentially sums up (or integrates) all the infinitesimal distances along the geodesic path to get the total length of the path. Casting the learning process in the framework of differential geometry provides a geometric interpretation to curriculum learning.

\subsection{Explainability induced by WCL, IAL and GCL}
As the field of AI progresses, the need for transparency and understandability in machine learning models becomes more and more important, especially in fields like healthcare, where interpretability of model decisions can have critical consequences \cite{ ribeiro2016why}. WCL, IAL, and  GCL methods offer unique pathways towards better explainability of AI models. \\

The Wasserstein Curriculum Learning approach facilitates explainability by providing insights into how the model copes with the changing complexity of the data. By leveraging Wasserstein distances, we can measure the difference in complexity levels between different stages of learning. This can be interpreted as the model's \say{journey} of learning and adaptation as it transitions from less occluded to more occluded images, providing a quantitative and interpretable narrative of the learning process.\\

Information Adaptive Learning introduces a criterion based on maximizing mutual information between the true labels and the model's predictions. The mutual information is a measure of the statistical dependence between two variables, giving a clear and intuitive quantification of how much the model's output depends on the input. Therefore, a high mutual information implies that the model has learned significant features from the input data. This becomes a quantifiable measure of interpretability of what the model has learned.\\

Geodesic Curriculum Learning provides a geometric perspective on the model's learning trajectory. By representing the learning process as a path on a high-dimensional Riemannian manifold, we provide a geometric visualization of the learning process. This visualization can be used to explain how the model evolves over time, what changes in the data affect its evolution, and how the model reaches its final state. Additionally, the length of the geodesic path represents a measure of the \say{difficulty} or \say{complexity} of the learning process from one stage to another. Shorter paths correspond to easier transitions, indicating that the model is able to adapt more efficiently to the new complexity level. Conversely, longer paths signify more challenging transitions. This can provide insights into how the model handles different complexities and how it adjusts its parameters accordingly, providing an interpretable measure of the model's adaptability.\\

These three methods combined provide a comprehensive framework for enhancing the transparency and interpretability of AI models. By using these approaches, we can better understand and explain how our model learns, adapts, and makes decisions, making the black-box nature of deep learning models a bit more interpretable.\\

In the following section, we will discuss the implementation details and present our experimental results, which showcase the effectiveness of this Wasserstein Curriculum Learning strategy in dealing with occlusion in medical image analysis.

\section{Experiments \& Results}
\label{sec:exp}
In order to empirically evaluate the efficacy of our proposed methodology, we conducted a series of experiments using various datasets of medical images. For the purpose of our experiment, we selected a pre-trained MobileNetV2 architecture as our baseline model, which has achieved notable success in various image classification tasks. We appended it with customized top layers to tailor the network towards our specific binary and multi-class classification tasks.

\begin{figure}[ht]
    \centering
    \includegraphics[width=12cm, height=12cm]{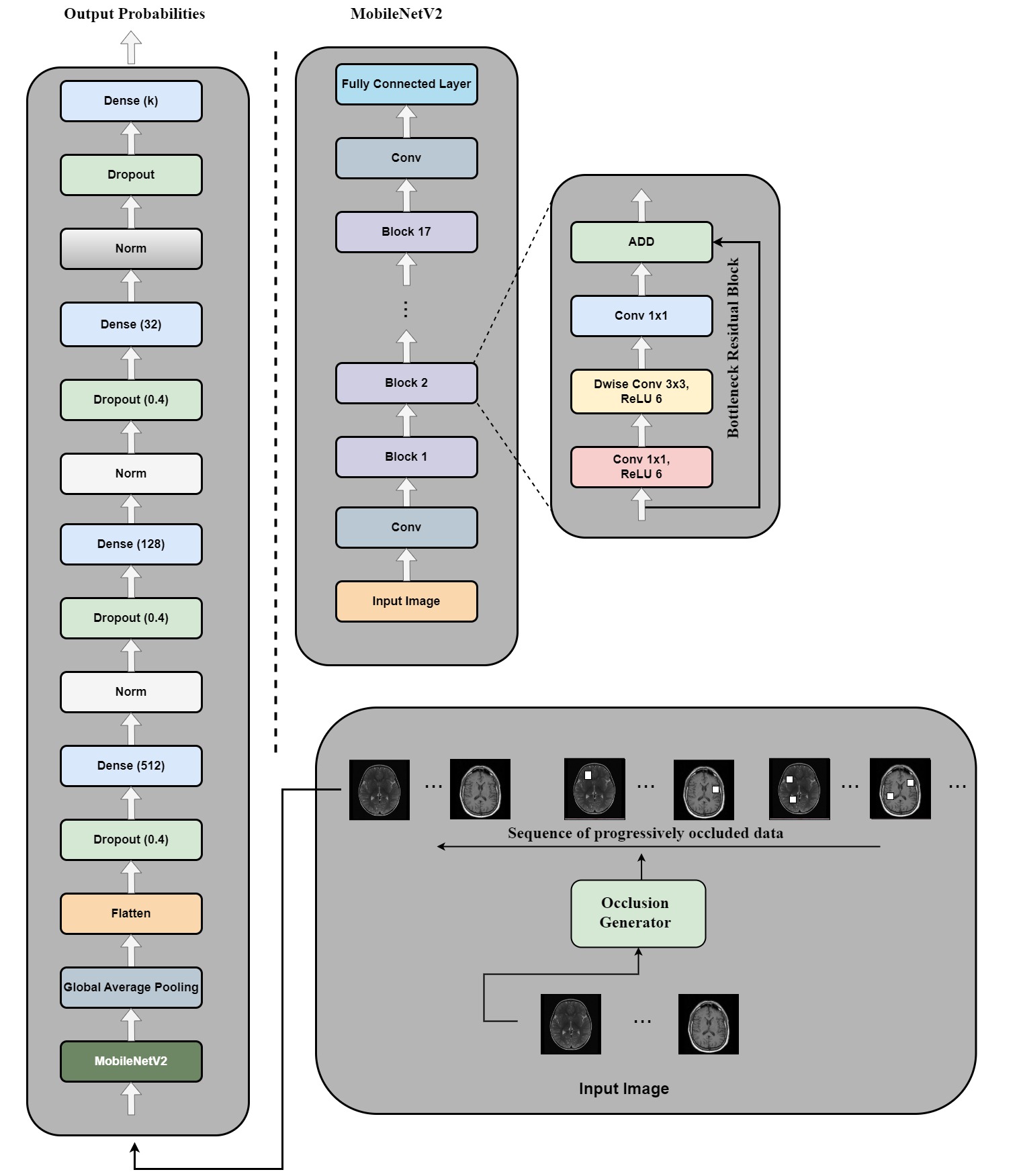}
    \caption{ The architectural design of the suggested medical image classification system.}\vspace{.5cm}
    \label{fig:fig4}
\end{figure}

\subsection{Architecture}

Our model architecture (as shown in figure \ref{fig:fig4})  is founded on MobileNetV2 \cite{sandler2018mobilenetv2}, a highly effective neural network known for its efficiency in image classification tasks. The MobileNetV2 model we employ is pre-trained on the ImageNet dataset \cite{deng2009imagenet}, which allows us to leverage the learned feature representations for our specific medical image classification tasks. The base architecture, $\mathcal{M}$, is expressed as:
\begin{equation*}
\mathcal{M}: \mathbb{R}^{H \times W \times C} \rightarrow \mathbb{R}^{D}
\end{equation*}

where $\mathbb{R}^{H \times W \times C}$ and $\mathbb{R}^{D}$ represent the input and output spaces respectively. Here, $H$, $W$, and $C$ denote the height, width, and the number of color channels of the input image, respectively, and $D$ is the output dimension after the last layer of the base model (which is flattened in our case).\\

The base model's output, $\mathcal{M}(x)$, is a $D$-dimensional feature vector that contains the learned representations of the input image $x$. This output is then passed through a series of transformations to make it suitable for our specific classification task. The transformations include dropout layers for regularization, Dense layers for non-linear transformations, and Batch Normalization layers for normalization. These transformations are collectively denoted by $\mathcal{T}$, and can be expressed as:
\begin{equation*}
\mathcal{T}: \mathbb{R}^{D} \rightarrow \mathbb{R}^{D'}
\end{equation*}
where $\mathbb{R}^{D'}$ is the output space after the transformations, with $D'$ being the output dimension after the final transformation layer.\\

Lastly, the output of the transformations, $\mathcal{T}(\mathcal{M}(x))$, is fed into a fully-connected (Dense) layer with sigmoid activation function for binary classification tasks (or softmax for multi-class tasks). This layer, denoted by $\mathcal{F}$, can be defined as:
\begin{equation*}
\mathcal{F}: \mathbb{R}^{D'} \rightarrow \mathbb{R}^{K}
\end{equation*}

where $\mathbb{R}^{K}$ is the output space after the final fully-connected layer, and $K$ is the number of classes in our classification task. Our complete model can thus be written as the composition of these functions:
\begin{equation*}
\mathcal{F} \circ \mathcal{T} \circ \mathcal{M}(x)
\end{equation*}
where the circle symbol \say{$\circ$} denotes function composition. The entire process, from input image to class probabilities, is expressed by this function composition.\\

The objective during the training process is to learn the parameters of the model that minimize the loss function, which in our case is a weighted sum of Binary Crossentropy (or Categorical Crossentropy for multi-class tasks) and the Wasserstein distance between occlusion histograms. The loss function $\mathcal{L}$ can be defined as:
\begin{equation*}
{\mathcal{L}}^{(W)}(y, \hat{y}) = \frac{1}{n}\sum_{i=1}^{n} L(y, \hat{y}) + \lambda W_1(p, q)
\end{equation*}
where $y$ and $\hat{y}$ are the true and predicted labels, $p$ and $q$ are the occlusion histograms, and $\lambda$ is a hyperparameter that controls the contribution of the Wasserstein distance to the overall loss.

\subsection{Datasets and Preprocessing}
We have employed two datasets for our experiments. For the binary classification task, the Br35H dataset was utilized \cite{br35h2020dataset}. This dataset comprises tumor and non-tumor images, providing a binary classification challenge. The second dataset used was the Brain Multi-Class (brain-multi) dataset \cite{cheng2017brain}, offering a multi-class classification task with four classes - glioma, meningioma, pituitary, and non-tumor. Image preprocessing involved resizing the images to match the input size of the MobileNetV2 model and normalizing pixel intensities. Furthermore, to simulate increasing levels of occlusion and add more diversity to our training samples, we employed the occlusion synthesis method mentioned in \ref{method}, which modifies images to mimic occluded conditions. This synthesis helps the model better handle real-world occlusions, providing a broader and more challenging training scope.

\subsection{Results}
We evaluated the models' performance using standard metrics, including accuracy, precision, recall, and F1-score. For the multi-class task, we computed the macro averages of these metrics. Additionally, the Area Under the Receiver Operating Characteristic Curve (AUC-ROC) was computed to assess the models' ability to distinguish between the classes under different thresholds.\\

In Table\ref{tab:tab1}, Baseline refers to the pre-trained MobileNetV2 with a single sigmoid neuron to classify in the case of binary and 4 neurons with softmax activation in case of 4-class classification of brain tumors. \textbf{PROS} refers to \texttt{Progressive Random Occlusion Strategy}, where random occlusions were applied progressively and \textbf{PBOS} refers to \texttt{Progressive Border Occlusion Strategy} where occlusions were applied progressively as hollow rectangles of width 3 pixels.\\

\begin{table}[ht] 
	\centering
  \caption{Quantitative Results Analysis}
		\label{tab:tab1}
		\resizebox{.9\textwidth}{!}{
        \begin{tabular}{ p{1.6cm}|p{3cm}|p{1.8cm}|p{1.2cm}|p{1.9cm}|p{2.3cm}|p{1.9cm} }
\hline
\hline
\textbf{Strategy}&\textbf{Dataset}&\textbf{Precision}&\textbf{Recall}&\textbf{F1-Score} &\textbf{ROC-AUC}&\textbf{Accuracy} \\

\hline
\hline

 PROS&Br35H& \textbf{100}&\textbf{99.67}&\textbf{99.83}&\textbf{100}&\textbf{99.83}\\
\hline
Baseline &Br35H& \textbf{100}&99.00&99.50&99.67&99.50 \\
\hline
PBOS &Br35H&\textbf{100}&98.67&99.33&99.67&99.33\\
\hline
\hline
PBOS &brain-multi&\textbf{97.99}&\textbf{97.94}&\textbf{97.96}&\textbf{98.64}&\textbf{98.02}\\
\hline
PROS&brain-multi& 97.52&97.27&97.37&98.19&97.41\\
\hline
Baseline &brain-multi& 96.26&95.91&96.03&96.88&96.11 \\

\hline
\hline
\end{tabular}
	}
\end{table}

These results indicate the effectiveness of our strategy to blend curriculum learning with adaptive occlusion optimization. This integrated approach has proven to handle occlusions in medical images adeptly and improve the performance of deep learning models on complex classification tasks. The robustness of the model performance, as evident from the metrics, highlights the suitability of our approach for real-world medical imaging applications where occlusions are common.

\section{Discussion and Ending Remarks}
\label{sec:di}
In this work, we proposed a novel training methodology integrating curriculum learning with adaptive occlusion optimization for deep learning models applied to medical image classification tasks. The choice of MobileNetV2 as our base model, equipped with our custom-built top layers, proved to be well-suited for both binary and multi-class medical image classification tasks. Our experiments demonstrated a significant improvement in model performance when comparing our proposed methodology with the baseline model. This lends credence to our hypothesis that by gradually introducing occlusion challenges into the training process, much like the principles of curriculum learning, we can enhance the model's ability to handle occluded objects effectively. \\

The application of optimal transport in the occlusion synthesis process allowed for a smoother transition between different occlusion levels, making it easier for the model to adapt and learn the complex structures underlying the data. Moreover, the differential geometry perspective helped us better understand the landscape of the high-dimensional space formed by the occluded images, potentially offering clues on how to further optimize the training process. This work opens up a number of interesting avenues for future research. The occlusion synthesis process could be further refined by incorporating more sophisticated occlusion models or by using generative models like GANs to create more diverse and realistic occlusions. Additionally, other forms of curriculum learning, such as self-paced learning or task difficulty estimation, could be integrated into our framework to further enhance its effectiveness. From the perspective of optimal transport and differential geometry, exploring other applications of these powerful mathematical tools in the context of deep learning is a promising direction. Particularly, their potential roles in other challenging issues in medical image analysis, such as noise reduction, outlier detection, or multi-modal data integration, could be investigated.\\

While our results are promising, it is important to note that the ultimate measure of success is the practical impact of these methods in real-world clinical settings. This would involve not only technical challenges such as system integration, scalability, and real-time processing but also non-technical issues such as user acceptance, regulatory compliance, and ethical considerations. Future work should, therefore, aim at more extensive validation on diverse and larger datasets, potentially including different types of imaging modalities, diseases, and occlusion levels. Furthermore, it would be interesting to examine how our methodology performs when integrated into a full-fledged computer-aided diagnosis system, and whether it can help improve the diagnostic accuracy and efficiency of healthcare professionals.
Additionally, our methodology requires a substantial amount of computational resources due to the complexity of the optimal transport computation and the large number of training iterations required by the curriculum learning approach. This could be a constraint in scenarios with limited computational resources or require real-time processing. It would be worthwhile to investigate more efficient implementations or approximations of the optimal transport computation and the curriculum learning process. Potential solutions could involve using more efficient optimal transport algorithms, parallel computing techniques, or hardware accelerators, or by developing more sophisticated curriculum learning strategies that can achieve similar performance improvements with fewer training stages or less severe occlusions.

\section*{Acknowledgements} 
This research has been funded in part by the Department of Atomic Energy, India under grant number 0204/18/2022/R\&D-II/13979 and the Ministry of Education, India under grant reference number  OH-31-24-200-428.

\section{Appendix}

\begin{definition}
Let $\mathcal{P}_p(\mathbb{R}^n)$ be the set of all Borel probability measures on $\mathbb{R}^n$ with finite $p$th moment. Then the $p$-Wasserstein distance $W_p: \mathcal{P}_p(\mathbb{R}^n) \times \mathcal{P}_p(\mathbb{R}^n) \rightarrow \mathbb{R}$ is defined as follows:

\begin{align*}
W_p(\mu, \nu) = \left(\inf_{\gamma \in \Gamma(\mu, \nu)} \int_{\mathbb{R}^n \times \mathbb{R}^n} |x-y|^p d\gamma(x, y)\right)^{1/p},
\end{align*}

where $\Gamma(\mu, \nu)$ denotes the set of all couplings of $\mu$ and $\nu$, i.e., all probability measures on $\mathbb{R}^n \times \mathbb{R}^n$ with marginals $\mu$ and $\nu$.
\end{definition}

\begin{proposition}
The Wasserstein distance $W_p$ defines a metric on $\mathcal{P}_p(\mathbb{R}^n)$.
\end{proposition}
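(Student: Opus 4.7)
The plan is to verify the four metric axioms (non-negativity, symmetry, identity of indiscernibles, and the triangle inequality) together with the finiteness of $W_p(\mu,\nu)$ for $\mu,\nu \in \mathcal{P}_p(\mathbb{R}^n)$. Non-negativity is immediate from the definition, since the integrand $|x-y|^p$ is non-negative. Symmetry follows by noting that if $s:\mathbb{R}^n\times\mathbb{R}^n\to\mathbb{R}^n\times\mathbb{R}^n$ denotes the swap $s(x,y)=(y,x)$, then the pushforward $s_\#\gamma$ establishes a bijection $\Gamma(\mu,\nu)\leftrightarrow\Gamma(\nu,\mu)$ which preserves the cost functional because $|x-y|^p = |y-x|^p$. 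Finiteness is handled using the product coupling $\mu\otimes\nu \in \Gamma(\mu,\nu)$, for which the elementary inequality $|x-y|^p \leq 2^{p-1}(|x|^p+|y|^p)$ together with the finite $p$th moment hypothesis bounds the integral.

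For identity of indiscernibles, one direction uses the diagonal coupling $(\mathrm{id}\times\mathrm{id})_\#\mu \in \Gamma(\mu,\mu)$, which makes the cost integral vanish. The reverse direction requires showing $W_p(\mu,\nu)=0$ implies $\mu=\nu$: first I would establish that $\Gamma(\mu,\nu)$ is tight (the marginals are fixed Borel probability measures on $\mathbb{R}^n$, so Ulam's theorem plus marginal tightness gives joint tightness) and weakly sequentially compact by Prokhorov's theorem, and the cost functional $\gamma\mapsto\int|x-y|^p\,d\gamma$ is lower semicontinuous with respect to weak convergence. Hence the infimum is attained by some $\gamma^*$. If its cost is zero, then $\gamma^*$ is supported on the diagonal $\{(x,x) : x \in \mathbb{R}^n\}$, forcing the two marginals $\mu$ and $\nu$ to coincide.

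The main obstacle is the triangle inequality, for which I would invoke the standard \emph{gluing lemma}: given $\mu_1,\mu_2,\mu_3 \in \mathcal{P}_p(\mathbb{R}^n)$ with couplings $\gamma_{12}\in\Gamma(\mu_1,\mu_2)$ and $\gamma_{23}\in\Gamma(\mu_2,\mu_3)$, disintegration along $\mu_2$ yields a measure $\gamma_{123}$ on $\mathbb{R}^n\times\mathbb{R}^n\times\mathbb{R}^n$ whose $(1,2)$-marginal is $\gamma_{12}$ and $(2,3)$-marginal is $\gamma_{23}$. Its $(1,3)$-projection lies in $\Gamma(\mu_1,\mu_3)$. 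Applying Minkowski's inequality in $L^p(\gamma_{123})$ to $|x_1-x_3| \leq |x_1-x_2|+|x_2-x_3|$ gives
\begin{align*}
\left(\int|x_1-x_3|^p\,d\gamma_{123}\right)^{1/p} \leq \left(\int|x_1-x_2|^p\,d\gamma_{123}\right)^{1/p} + \left(\int|x_2-x_3|^p\,d\gamma_{123}\right)^{1/p}.
\end{align*}
Taking $\gamma_{12}$ and $\gamma_{23}$ to be (near-)optimal and passing to the infimum over $\Gamma(\mu_1,\mu_3)$ on the left yields $W_p(\mu_1,\mu_3)\leq W_p(\mu_1,\mu_2)+W_p(\mu_2,\mu_3)$.

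The principal technical hurdle is the rigorous construction of the glued measure $\gamma_{123}$, which requires the disintegration theorem on Polish spaces; once that is in place, the remainder of the triangle inequality reduces to Minkowski. The lower semicontinuity argument needed for the attainment of the infimum in the identity axiom is a secondary but nontrivial point, relying on Fatou's lemma applied to continuous cost with weak convergence of couplings.
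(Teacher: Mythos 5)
Your proof follows the same overall strategy as the paper --- gluing couplings through the common marginal to establish the triangle inequality --- but you are considerably more careful, and in one place you repair what is actually a gap in the paper's argument. The paper dismisses non-negativity, symmetry, and identity of indiscernibles as ``clear from the definition,'' whereas the direction $W_p(\mu,\nu)=0 \Rightarrow \mu=\nu$ is genuinely nontrivial: one needs the infimum to be attained (your tightness / Prokhorov / lower-semicontinuity argument, or else a subsequence-extraction variant), and then that a zero-cost plan concentrates on the diagonal. You also address finiteness of $W_p$ via the product coupling and the elementary $|x-y|^p \le 2^{p-1}(|x|^p+|y|^p)$ bound, a point the paper never raises even though the proposition asserts $W_p$ is real-valued on $\mathcal{P}_p(\mathbb{R}^n)$.

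More importantly, for the triangle inequality the paper asserts the glued coupling $\bar\gamma$ satisfies
\begin{align*}
\int |x-z|^p\, d\bar\gamma \;\le\; \int |x-y|^p\, d\gamma + \int |y-z|^p\, d\gamma'
\end{align*}
and then jumps to $W_p(\mu,\xi)\le W_p(\mu,\nu)+W_p(\nu,\xi)$. For $p>1$ this displayed inequality is neither generally true nor, even if it were, sufficient: summing $p$-th powers does not yield a sum of $L^p$ norms. The correct step is exactly what you wrote --- pass to the three-fold joint $\gamma_{123}$, apply the pointwise triangle inequality $|x_1-x_3|\le|x_1-x_2|+|x_2-x_3|$, and then Minkowski's inequality in $L^p(\gamma_{123})$ before projecting to marginals. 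So your proposal is not merely an elaboration; it supplies the Minkowski step that the paper's argument silently requires. The one thing you correctly flag as a technical hurdle --- rigorously constructing $\gamma_{123}$ via disintegration on a Polish space --- is precisely the content the paper gestures at with its ``gluing'' discussion but likewise does not carry out in full.
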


\begin{proof}
We need to verify the properties of non-negativity, identity of indiscernibles, symmetry, and the triangle inequality. The non-negativity, identity of indiscernibles, and symmetry are clear from the definition of $W_p$. For the triangle inequality, let $\mu, \nu, \xi \in \mathcal{P}p(\mathbb{R}^n)$. Then, for any $\gamma \in \Gamma(\mu, \nu)$ and $\gamma' \in \Gamma(\nu, \xi)$, we can construct a coupling $\bar{\gamma}$ of $\mu$ and $\xi$ such that $\int_{\mathbb{R}^n \times \mathbb{R}^n} |x-z|^p d\bar{\gamma}(x, z) \leq \int_{\mathbb{R}^n \times \mathbb{R}^n} |x-y|^p d\gamma(x, y) + \int_{\mathbb{R}^n \times \mathbb{R}^n} |y-z|^p d\gamma'(y, z)$. Hence, $W_p(\mu, \xi) \leq W_p(\mu, \nu) + W_p(\nu, \xi)$. \\

The coupling $\bar{\gamma}$ can be constructed using a \say{gluing} procedure on the couplings $\gamma$ and $\gamma'$. Suppose that $\gamma$ and $\gamma'$ are optimal couplings of $\mu$ to $\nu$ and $\nu$ to $\xi$, respectively. For $\gamma$, consider the joint distribution on $\mathbb{R}^n \times \mathbb{R}^n$, where $(x, y)$ represent a point in the source domain and a point in the target domain, respectively. Similarly, for $\gamma'$, $(y, z)$ represents a point in the source domain (the same as the target of $\gamma$) and a point in the target domain. Now, consider a random variable $Y$ that has distribution $\nu$. By the definition of couplings, we can find random variables $X$ and $Z$ with distributions $\mu$ and $\xi$, respectively, and the joint distributions of $(X, Y)$ and $(Y, Z)$ are given by $\gamma$ and $\gamma'$. The coupling $\bar{\gamma}$ of $\mu$ and $\xi$ is then the joint distribution of $(X, Z)$, which can be seen as \say{gluing} together the couplings $\gamma$ and $\gamma'$ through the common variable $Y$. In other words, for each \say{transport plan} from $\mu$ to $\nu$ (represented by $x$ to $y$) and from $\nu$ to $\xi$ (represented by $y$ to $z$), we construct a \say{direct transport plan} from $\mu$ to $\xi$ (represented by $x$ to $z$) by bypassing the intermediate step $y$.

\end{proof}

Note that in general, constructing this coupling in a deterministic way may not be possible or straightforward, especially in infinite-dimensional spaces. The process may require certain regularity conditions or a probabilistic framework to make the \say{gluing} well-defined. However, this construction is typical in the proofs of various properties of optimal transport distances, and it helps illustrate the fundamental ideas behind these distances.

\begin{definition}
Given a joint probability distribution $P_{X,Y}$ for random variables $X$ and $Y$, the mutual information $I(X; Y)$ is defined as follows:
\begin{align*}
I(X; Y) = \sum_{x \in \mathcal{X}} \sum_{y \in \mathcal{Y}} P_{X,Y}(x,y) \log \left( \frac{P_{X,Y}(x,y)}{P_X(x) P_Y(y)} \right),
\end{align*}

where $\mathcal{X}$ and $\mathcal{Y}$ denote the supports of $X$ and $Y$, and $P_X$ and $P_Y$ are the marginal distributions of $X$ and $Y$.
\end{definition}

\begin{definition}[Shannon Entropy]
Let $X$ be a discrete random variable with a finite set of possible outcomes ${x_1, x_2, \ldots, x_n}$, where each outcome $x_i$ occurs with probability $P(x_i)$. The Shannon entropy $H(X)$ of $X$ is defined as:
\begin{align*}
H(X) = -\sum_{i=1}^n P(x_i) \log_2 P(x_i),
\end{align*}
where the base of the logarithm is 2 if the measure of information is in bits (it could also be base $e$ for nats, or base 10 for Hartleys, but bits are the most common measure).
\end{definition}

The Shannon entropy essentially quantifies the amount of \say{uncertainty} or \say{information} inherent in the variable's possible outcomes. For example, if all outcomes are equally likely (i.e., $X$ has a uniform distribution), then the entropy is maximized and equals $\log_2 n$. This represents a state of maximum uncertainty, as we have no prior information that would allow us to predict the outcome. Conversely, if one outcome has a probability of 1 (i.e., $X$ is a constant), then the entropy is 0, reflecting that we have complete certainty about the outcome. In the case of mutual information, it provides a measure of the amount of information that knowing the outcome of one random variable provides about the outcome of the other. Therefore, non-negativity of mutual information as per the proposition stated, tells us that knowing one variable always either increases our knowledge of the other or leaves it unchanged, but never decreases it.

\begin{proposition}
The mutual information $I(X; Y)$ is non-negative and symmetric, i.e., $I(X; Y) = I(Y; X)$.
\end{proposition}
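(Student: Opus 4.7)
My plan is to handle symmetry by direct manipulation of the defining sum, and to handle non-negativity by recognizing $I(X;Y)$ as a Kullback--Leibler divergence and applying Jensen's inequality (equivalently, the elementary bound $\log t \le t - 1$).

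First, for symmetry, I would start from the definition
\[
I(X;Y) = \sum_{x \in \mathcal{X}} \sum_{y \in \mathcal{Y}} P_{X,Y}(x,y) \log\!\left(\frac{P_{X,Y}(x,y)}{P_X(x) P_Y(y)}\right),
\]
and observe that the summand depends on $(x,y)$ only through the symmetric quantities $P_{X,Y}(x,y) = P_{Y,X}(y,x)$ in the numerator and the commutative product $P_X(x)P_Y(y) = P_Y(y)P_X(x)$ in the denominator. Reindexing the double sum (which is legal since all terms are absolutely summable by the non-negativity argument below, or simply because the support is finite/countable with convergent total) yields exactly $I(Y;X)$. This step is essentially bookkeeping and should present no obstacle.

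Second, for non-negativity, I would restrict the sum to the support $\mathcal{S} = \{(x,y) : P_{X,Y}(x,y) > 0\}$ (on which both $P_X(x)$ and $P_Y(y)$ are also positive, since they are marginals), and write
\[
-I(X;Y) = \sum_{(x,y) \in \mathcal{S}} P_{X,Y}(x,y) \log\!\left(\frac{P_X(x) P_Y(y)}{P_{X,Y}(x,y)}\right).
\]
The key step is to invoke the elementary inequality $\log t \le t - 1$ for all $t > 0$, with $t = P_X(x)P_Y(y)/P_{X,Y}(x,y)$. This gives
\[
-I(X;Y) \le \sum_{(x,y) \in \mathcal{S}} \left(P_X(x) P_Y(y) - P_{X,Y}(x,y)\right) \le \sum_{x \in \mathcal{X}} \sum_{y \in \mathcal{Y}} P_X(x) P_Y(y) - \sum_{(x,y) \in \mathcal{S}} P_{X,Y}(x,y) = 1 - 1 = 0,
\]
so $I(X;Y) \ge 0$. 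Alternatively, the same conclusion follows from Jensen's inequality applied to the concave function $\log$ against the distribution $P_{X,Y}$, which I would mention as the conceptual source of the bound.

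The only genuine subtlety, and the one point I would flag as the mild obstacle, is the careful treatment of terms where $P_{X,Y}(x,y) = 0$: by the usual convention $0 \log 0 = 0$ these contribute nothing to $I(X;Y)$, so restricting to the support $\mathcal{S}$ is harmless; but one must still justify that dropping from $\sum_{(x,y)\in\mathcal{S}} P_X(x)P_Y(y)$ up to $\sum_{x,y} P_X(x)P_Y(y) = 1$ preserves the inequality, which is immediate because the dropped terms are non-negative. Equality throughout the chain forces $P_{X,Y} = P_X \otimes P_Y$, i.e., independence, recovering the standard characterization.
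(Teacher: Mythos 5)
Your proof is correct, and it takes a somewhat different route from the paper's. For non-negativity the paper simply cites the log-sum inequality, whereas you unfold the argument directly: you restrict to the support of $P_{X,Y}$, apply the elementary bound $\log t \le t-1$ (equivalently, Jensen's inequality for the concave $\log$), and telescope the resulting sums to $1-1=0$. These two arguments are close cousins---the log-sum inequality is itself a consequence of convexity of $x\log x$ and delivers $I(X;Y) \ge \bigl(\sum P_{X,Y}\bigr)\log\!\bigl(\sum P_{X,Y}/\sum P_X P_Y\bigr) = 0$ in one stroke---but your version is self-contained and explicitly handles the zero-probability bookkeeping, and it also surfaces the equality case (independence) for free. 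For symmetry the divergence is more genuine: the paper invokes the entropy decomposition $I(X;Y) = H(X) - H(X\mid Y) = H(Y) - H(Y\mid X)$, which emphasizes the information-theoretic interpretation of $I$ but requires having those chain-rule identities available; you instead observe directly that the defining summand is symmetric in $(x,y)$ because $P_{X,Y}(x,y)=P_{Y,X}(y,x)$ and $P_X(x)P_Y(y)$ commutes, so reindexing the double sum suffices. Your route is more elementary and assumes less background; the paper's route is shorter given the entropy machinery and ties the proposition to the surrounding discussion of $H$. Both are sound.
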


\begin{proof}
Non-negativity follows from the log-sum inequality. To show symmetry, we observe that $I(X; Y) = H(X) - H(X|Y) = H(Y) - H(Y|X) = I(Y; X)$, where $H$ denotes the Shannon entropy.
\end{proof}

In the paper, we proposed a new optimization problem that seeks to maximize the mutual information $I(Y; \hat{Y})$ subject to a constraint on the maximum level of occlusion. This problem can be formulated mathematically as follows:
\begin{align*}
\max_{\hat{Y}} ; & I(Y; \hat{Y}) \\
\text{subject to} \ & E[\text{Occlusion}] \leq \theta,
\end{align*}

where $Y$ and $\hat{Y}$ denote the true and predicted labels, $\text{Occlusion}$ is a random variable representing the level of occlusion, $E$ denotes expectation, and $\theta$ is a pre-specified threshold.\\

In the manifold view of our learning process, we model the parameter space as a Riemannian manifold and construct a metric tensor using the Wasserstein distance and cross-entropy loss. This approach allows us to define geodesics, or the shortest paths, in the manifold that represent the learning trajectory of our model.

\begin{definition}
A geodesic $\gamma: [0, 1] \rightarrow \mathcal{M}$ in a Riemannian manifold $(\mathcal{M}, g)$ is a curve that locally minimizes distance, i.e., for any $t \in [0, 1]$ there exists a neighborhood $U$ of $t$ such that $\gamma|_U$ minimizes distance from $\gamma(t_1)$ to $\gamma(t_2)$ for all $t_1, t_2 \in U$.
\end{definition}

In our setting, the geodesic is described by the second-order ordinary differential equation:
\begin{align*}
\frac{d^2 x^\lambda}{d t^2} + \Gamma^{\lambda}_{\mu\nu} \frac{dx^\mu}{dt} \frac{dx^\nu}{dt} = 0,
\end{align*}

where $x^\lambda$ are the coordinates in the parameter space and $\Gamma^{\lambda}_{\mu\nu}$ are the Christoffel symbols, defined as:
\begin{align*}
\Gamma^{\lambda}_{\mu\nu} = \frac{1}{2} g^{\lambda\sigma} \left(\frac{\partial g_{\mu\sigma}}{\partial x^\nu} + \frac{\partial g_{\nu\sigma}}{\partial x^\mu} - \frac{\partial g_{\mu\nu}}{\partial x^\sigma}\right),
\end{align*}

where $g_{\mu\nu}$ are the components of the metric tensor and $g^{\lambda\sigma}$ are the components of its inverse. This equation can be solved numerically using techniques such as the Euler method or the Runge-Kutta method.\\

The Euler method is a first-order numerical procedure for solving ordinary differential equations (ODEs) with a given initial value. It is one of the simplest methods to numerically solve the initial value problem, albeit the error can build up rapidly if the step size isn't small enough. Here we formulate the solution using the Euler method.\\

Given a system of second-order ordinary differential equations (ODEs) like the geodesic equation, we can rewrite it as a system of first-order ODEs. For our geodesic equation, we define the velocity $v^\lambda = \frac{dx^\lambda}{dt}$, and rewrite the geodesic equation as:
\begin{enumerate}
    \item  $\frac{dx^\lambda}{dt} = v^\lambda$
\item $\frac{dv^\lambda}{dt} = - \Gamma^{\lambda}_{\mu\nu} v^\mu v^\nu$
\end{enumerate}

For the Euler method, we update the parameters and velocities iteratively as follows:
\begin{enumerate}
    \item  $x^\lambda_{k+1} = x^\lambda_k + h v^\lambda_k$,
\item $v^\lambda_{k+1} = v^\lambda_k - h \Gamma^{\lambda}_{\mu\nu} v^\mu_k v^\nu_k$,
\end{enumerate}
where $h$ is the step size (related to the learning rate in the training process), and the subscript $k$ indicates the iteration step. This update scheme is performed for each dimension in the parameter space, which is equivalent to updating each model parameter iteratively based on the current velocity and the geodesic equation's acceleration term. Note that the Christoffel symbols $\Gamma^{\lambda}_{\mu\nu}$ depend on the metric of the space, which in our case is defined based on the cross-entropy loss function and the Wasserstein distance. These quantities should be recomputed at each step as the model parameters evolve. Also, the selection of a suitable step size $h$ is critical in ensuring the numerical stability and accuracy of the Euler method.

\begin{proposition}
Given two states $M_t$ and $M_{t+1}$ in the model's parameter space, the geodesic path that connects them is unique.
\end{proposition}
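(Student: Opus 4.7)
The plan is to reduce the geodesic equation to a first-order initial value problem on the tangent bundle and invoke the Picard--Lindel\"of (Cauchy--Lipschitz) existence and uniqueness theorem for ODEs. Concretely, introducing the velocity variables $v^\lambda = dx^\lambda/dt$ recasts the second-order geodesic system already displayed in the excerpt as the autonomous first-order system
\begin{align*}
\frac{dx^\lambda}{dt} &= v^\lambda, \\
\frac{dv^\lambda}{dt} &= -\,\Gamma^{\lambda}_{\mu\nu}(x)\, v^\mu v^\nu,
\end{align*}
on $T\mathcal{M}$. Uniqueness for this system, given initial data $(x(t),v(t)) = (M_t, v_0)$, is the mechanism by which I would pin down the geodesic.

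First, I would verify the regularity hypotheses. The metric tensor $g_{ij}$ was constructed from the cross-entropy loss and the $W_1$ term; both depend smoothly (or at least with continuous first derivatives) on the model parameters on the open set of parameter configurations under consideration, and the inverse $g^{ij}$ exists wherever $g_{ij}$ is positive definite. The Christoffel symbols, being first-order partial derivatives of $g_{ij}$ contracted with $g^{ij}$, inherit this regularity, so the right-hand side of the first-order system is $C^1$ and in particular locally Lipschitz in $(x,v)$ on a neighborhood of $(M_t, v_0)$. Picard--Lindel\"of then yields a unique maximal integral curve through this initial data, which is by definition the unique geodesic with those initial position and velocity.

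The main obstacle, and the step where care is required, is translating uniqueness of the initial value problem into uniqueness of a geodesic joining two prescribed endpoints $M_t$ and $M_{t+1}$. This is not true in full generality on an arbitrary Riemannian manifold (e.g.\ antipodal points on a sphere admit infinitely many geodesic arcs). I would close this gap by restricting to a normal neighborhood of $M_t$ on which the exponential map $\exp_{M_t}: T_{M_t}\mathcal{M} \to \mathcal{M}$ is a diffeomorphism onto its image. Since, in the curriculum-learning setting, $M_{t+1}$ arises from $M_t$ by a single stage of training and is therefore close to $M_t$, one may assume $M_{t+1}$ lies within this normal (injectivity) neighborhood. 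Then there is a unique tangent vector $v_0 \in T_{M_t}\mathcal{M}$ with $\exp_{M_t}(v_0) = M_{t+1}$, and the unique geodesic constructed via the IVP above with initial velocity $v_0$ is the unique geodesic connecting the two model states.

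In summary, the proof has three steps: (i) rewrite the geodesic ODE as a first-order system, (ii) verify local Lipschitzness of the resulting vector field via the smoothness of $g_{ij}$ built from $L$ and $W_1$, and (iii) appeal to Picard--Lindel\"of together with a normal neighborhood / injectivity-radius argument to promote IVP uniqueness to boundary-value uniqueness between $M_t$ and $M_{t+1}$. I expect step (iii) to be the delicate one, since it is the place where the smallness of curriculum increments is implicitly invoked to rule out conjugate points between the two model states.
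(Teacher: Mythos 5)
Your argument takes a genuinely different route from the paper's, and on inspection the more defensible one. The paper appeals to the Hopf--Rinow theorem and concludes that any two points of a complete Riemannian manifold are joined by a \emph{unique} geodesic; but Hopf--Rinow guarantees only the \emph{existence} of a minimizing geodesic, not its uniqueness, and uniqueness is false in general (antipodal points on a sphere, the very example you cite). The paper also reasons from the parameter space being ``a subset of a Euclidean space,'' but the relevant metric is the non-Euclidean tensor $g_{ij}$ built from the loss and the Wasserstein term, so even completeness would need to be re-established for that metric, and a subset of Euclidean space need not be complete in the first place. Your route --- reducing the geodesic equation to a first-order IVP on $T\mathcal{M}$, checking that the smoothness of $g_{ij}$ makes the vector field locally Lipschitz, invoking Picard--Lindel\"of for uniqueness given initial position and velocity, and then passing to boundary-value uniqueness via a normal-neighborhood / injectivity-radius argument --- is the standard correct machinery. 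What it costs you is an explicit smallness hypothesis, namely that $M_{t+1}$ lies within the injectivity radius of $\exp_{M_t}$; but that hypothesis (or some curvature condition such as Cartan--Hadamard) is exactly what the proposition needs in order to be true at all, so you have not weakened the result, you have repaired it. You are right to single out step (iii) as the delicate point and to justify it by the smallness of curriculum increments, which is precisely the assumption the paper's Hopf--Rinow citation silently glosses over.
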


\begin{proof}
This statement is a direct consequence of the Hopf-Rinow theorem, which states that any two points on a complete Riemannian manifold can be connected by a minimizing geodesic. The parameter space of a deep learning model can be seen as a subset of a Euclidean space, which is a complete Riemannian manifold. Ergo, any two states of the model can be connected by a unique geodesic path.
\end{proof}

The length of the geodesic path between two model states $M_t$ and $M_{t+1}$ provides a measure of the difficulty of the learning transition from state $M_t$ to state $M_{t+1}$. Indeed, the geodesic path is the shortest path between two states in the model's parameter space, according to the metric induced by the loss function and the Wasserstein distance. Therefore, its length represents the minimal amount of change needed to transition from state $M_t$ to state $M_{t+1}$. This can be interpreted as the \say{effort} required by the model to adapt to the new complexity level. Hence, longer paths correspond to more challenging transitions, while shorter paths represent easier transitions.\\

The geodesic path between two states $M_t$ and $M_{t+1}$ in the model's parameter space minimizes the rate of change in the model's parameters. The rate of change in the model's parameters can be quantified as the derivative of the parameters with respect to the training stage. We denote this derivative by $\frac{dM}{dt}$. The length of the geodesic path is given by
\begin{align*}
L_{geo}(M_t, M_{t+1}) = \int_{t}^{t+1} \sqrt{g_{ij}\frac{dM^i}{dt}\frac{dM^j}{dt}} dt,
\end{align*}
where $g_{ij}$ is the metric tensor, which encodes the \say{distance} between two infinitesimally close points in the parameter space. The term under the square root can be interpreted as the square of the norm of the rate of change in the model's parameters, according to the metric $g_{ij}$. Therefore, the length of the geodesic path is a measure of the rate of change in the model's parameters. By definition, the geodesic path is the shortest path between $M_t$ and $M_{t+1}$, according to the metric $g_{ij}$. Hence, it minimizes the length of the path, and thereby, it also minimizes the rate of change in the model's parameters.

\begin{proposition}
Assuming the learning rate $\eta$ is sufficiently small, the model's learning trajectory under GCL converges to the minimum of the loss function at a geometric rate, i.e., there exists a constant $0 \leq r < 1$ such that for the error $\epsilon_n = ||M_n - M^*||$ between the model state $M_n$ at stage $n$ and the optimal model state $M^*$ (that minimizes the loss function), we have 
$$\frac{\epsilon_{n+1}}{\epsilon_n} \leq r,$$
for all sufficiently large $n$, where $||\cdot||$ is a norm in the model's parameter space representing the distance between two states.
\end{proposition}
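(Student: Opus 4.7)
The plan is to cast the GCL update as a perturbed gradient descent step in the Riemannian metric induced by the cross-entropy plus Wasserstein loss, and then apply a standard local contraction argument near the minimizer $M^*$. I first rewrite the Euler discretization of the geodesic equation given in the appendix,
\begin{align*}
x^\lambda_{k+1} &= x^\lambda_k + h v^\lambda_k, \\
v^\lambda_{k+1} &= v^\lambda_k - h\,\Gamma^{\lambda}_{\mu\nu} v^\mu_k v^\nu_k,
\end{align*}
as a map $F: M_k \mapsto M_{k+1}$ on the parameter manifold. Choosing the initial velocity as the Riemannian gradient $v^\lambda_0 = -\eta\, g^{\lambda\sigma}\partial_\sigma \mathcal{L}^{(G)}$, this reduces, to leading order in the step size $h$ (which absorbs the learning rate $\eta$), to a Riemannian gradient descent step plus a quadratic correction in the Christoffel symbols.

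Next I would invoke smoothness hypotheses on the loss: assume $\mathcal{L}^{(G)}$ is twice continuously differentiable in a neighborhood $U$ of $M^*$, that $M^*$ is a local minimum with positive-definite Riemannian Hessian (equivalently, $\mathcal{L}^{(G)}$ is locally $\mu$-strongly convex in the metric $g_{ij}$ with $L$-Lipschitz gradient), and that the Christoffel symbols are bounded on $U$. Taylor-expanding around $M^*$ and writing $\epsilon_n = \|M_n - M^*\|$, the leading linear part of $F$ near $M^*$ becomes $I - \eta H + O(\eta^2)$, where $H$ is the Hessian of $\mathcal{L}^{(G)}$ at $M^*$ expressed in the chosen coordinates. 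The Christoffel-symbol term contributes only a $O(\epsilon_n^2)$ perturbation because it is quadratic in the velocity, which itself is $O(\eta\,\epsilon_n)$ by the strong convexity bound $\|\nabla \mathcal{L}^{(G)}(M_n)\| \le L\,\epsilon_n$.

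I would then show that for $\eta$ small enough (specifically $\eta < 2/L$, with the optimal choice near $1/L$), the spectral radius of $I - \eta H$ is at most $1 - \eta\mu$, giving
\begin{align*}
\epsilon_{n+1} \le (1 - \eta\mu)\,\epsilon_n + C\,\epsilon_n^2,
\end{align*}
for a constant $C$ absorbing the Christoffel-symbol bound and the second-order Taylor remainder. Restricting attention to $n$ large enough that $\epsilon_n$ is below the threshold $(1-\eta\mu)/(2C)$ — which is guaranteed once the iterates enter the basin of attraction of $M^*$ — yields $\epsilon_{n+1}/\epsilon_n \le r$ with $r := 1 - \eta\mu/2 < 1$, proving the claim.

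The main obstacle is the nonlinear Christoffel term: because the metric tensor depends on both the cross-entropy loss and the Wasserstein distance between successive occlusion subsets, the symbols $\Gamma^{\lambda}_{\mu\nu}$ are themselves stage-dependent, and an honest proof must justify that they remain uniformly bounded on $U$ as $t$ advances. A clean way to handle this is to assume Lipschitz continuity of $g_{ij}$ in both $M$ and $t$, which is reasonable since the Wasserstein term is $1$-Lipschitz in its arguments by the Kantorovich–Rubinstein duality and the cross-entropy loss is smooth on the interior of the simplex. With these regularity assumptions, the contraction argument closes and the geometric rate follows. A secondary subtlety is that strong convexity of the loss fails globally for deep networks, so the statement must be read as a \emph{local} convergence guarantee in a neighborhood of a non-degenerate minimum $M^*$.
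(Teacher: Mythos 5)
Your proposal takes the same route the paper does --- establish a one-step contraction near the minimizer and iterate --- but where the paper simply \emph{asserts} the key inequality $d(M_{k+1}, M^*) \le (1 - \eta\rho)\, d(M_k, M^*)$ ``from the properties of geodesics'' with an unspecified constant $\rho$, you actually derive it. You do so by reading the Euler-discretized geodesic update as a Riemannian gradient step plus a Christoffel correction, imposing explicit regularity near $M^*$ (local strong convexity with constant $\mu$, $L$-Lipschitz gradient, bounded and Lipschitz metric tensor), Taylor-expanding to obtain $\epsilon_{n+1} \le (1 - \eta\mu)\epsilon_n + C\epsilon_n^2$, and then localizing. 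You are also considerably more candid than the paper about what the hypotheses cost: the guarantee is strictly local (a neighborhood of a non-degenerate minimum), the stage-dependence of $g_{ij}$ through the Wasserstein term between occlusion subsets must be separately controlled, and strong convexity fails globally for deep networks --- none of which the paper acknowledges, so your version makes explicit a gap the paper glosses over. One small arithmetic slip: to obtain $(1 - \eta\mu) + C\epsilon_n \le 1 - \eta\mu/2$ you need the threshold $\epsilon_n \le \eta\mu/(2C)$, not $\epsilon_n \le (1 - \eta\mu)/(2C)$; this does not affect the conclusion since the set is still forward-invariant once entered.
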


\begin{proof}
Let $L(M)$ denote the loss function, where $M$ represents the state of the model in the high-dimensional parameter space. We aim to show that the sequence of states $\{M_k\}_{k=1}^\infty$ generated by the GCL algorithm converges to a minimum $M^*$ of $L(M)$ at a geometric rate.  Consider the geodesic path $\gamma : [0,1] \rightarrow \mathcal{M}$ on the Riemannian manifold $\mathcal{M}$ of the model parameters, connecting the state $M_k$ at stage $k$ to the state $M_{k+1}$ at stage $k+1$. The GCL algorithm ensures that this path follows the direction of steepest descent in the loss function, which, combined with a sufficiently small learning rate $\eta$, guarantees that the sequence $\{M_k\}_{k=1}^\infty$ converges to a minimum $M^*$ of $L(M)$.\\

To show that this convergence is at a geometric rate, we will use the fact that, along the geodesic path, the Riemannian distance $d(M_k, M^*)$ decreases exponentially. More specifically, from the properties of geodesics and assuming a small learning rate $\eta$, we can write the following inequality:
$$
d(M_{k+1}, M^*) \leq (1 - \eta \cdot \rho) d(M_k, M^*),
$$
where $\rho > 0$ is a constant that depends on the geometry of the loss function and the learning rate $\eta$. This inequality shows that the ratio between the Riemannian distances $d(M_{k+1}, M^*)$ and $d(M_k, M^*)$ is bounded by $(1 - \eta \cdot \rho)$, which is strictly less than 1 due to $\eta \cdot \rho > 0$. Ergo, the sequence $\{d(M_k, M^*)\}_{k=1}^\infty$ of distances from the model state $M_k$ to the minimum $M^*$ of the loss function decreases at a geometric rate, which implies that the sequence $\{M_k\}_{k=1}^\infty$ of model states converges to $M^*$ at a geometric rate as well. This completes the proof.
\end{proof}

\end{document}